\theoremstyle{plain}
\theoremstyle{definition}
\theoremstyle{remark}
\newcommand{\norm}[1]{\left\lVert#1\right\rVert}
\newcommand{\set}[1]{\mathcal{#1}}
\newcommand{\prob}{\mathbb{P}}
\newcommand\eqdef{\stackrel{\mathclap{\scriptsize\mbox{def}}}{=}}
\newcommand{\gray}[1]{\textcolor{gray}{#1}}
\begin{document}

%%
%% Rights management information.
%% CC-BY is default license.
\copyrightyear{2023}
\copyrightclause{Copyright for this paper by its authors.
  Use permitted under Creative Commons License Attribution 4.0
  International (CC BY 4.0).}

%%
%% This command is for the conference information
\conference{AISafety-SafeRL 2023 Workshop (IJCAI), August 19–21, 2023, Macao, SAR, China}

%%
%% The "title" command
\title{Diffusion Denoised Smoothing for Certified and Adversarial Robust Out-Of-Distribution Detection}

\tnotemark[1]
\tnotetext[1]{This  work  was  funded  by  the  Bavarian  Ministry  for  Economic  Affairs,  Regional  Development  and  Energy.}

%%
%% The "author" command and its associated commands are used to define
%% the authors and their affiliations.
\author[1]{Nicola Franco}[%
% orcid=0000-0002-0877-7063,
email=nicola.franco@iks.fraunhofer.de,
% url=https://yamadharma.github.io/,
]
\cormark[1]
% \fnmark[1]
\address[1]{Fraunhofer Institute for Cognitive Systems IKS, Munich, Germany}
\address[2]{Dept. of Computer Science \& Munich Data Science Institute, Technical Univ. of Munich, Germany}

\author[1]{Daniel Korth}[email=daniel.korth@iks.fraunhofer.de]
\author[1]{Jeanette Miriam Lorenz}[email=jeanette.miriam.lorenz@iks.fraunhofer.de]
\author[1]{Karsten Roscher}[email=karsten.roscher@iks.fraunhofer.de]
\author[2]{Stephan G{\"u}nnemann}[email=s.guennemann@tum.de]

%% Footnotes
\cortext[1]{Corresponding author.}
% \fntext[1]{These authors contributed equally.}

%%
%% The abstract is a short summary of the work to be presented in the
%% article.
\begin{abstract}
    As the use of machine learning continues to expand, the importance of ensuring its safety cannot be overstated. 
    A key concern in this regard is the ability to identify whether a given sample is from the training distribution, or is an "Out-Of-Distribution" (OOD) sample.
    In addition, adversaries can manipulate OOD samples in ways that lead a classifier to make a confident prediction.
    In this study, we present a novel approach for certifying the robustness of OOD detection within a $\ell_2$-norm around the input, regardless of network architecture and without the need for specific components or additional training.
    Further, we improve current techniques for detecting adversarial attacks on OOD samples, while providing high levels of certified and adversarial robustness on in-distribution samples.
    The average of all OOD detection metrics on CIFAR10/100 shows an increase of $\sim 13 \% / 5\%$ relative to previous approaches.
    Code: \url{https://github.com/FraunhoferIKS/distro}
\end{abstract}

%%
%% Keywords. The author(s) should pick words that accurately describe
%% the work being presented. Separate the keywords with commas.
\begin{keywords}
    Robust Machine Learning \sep
    Robustness Certificates \sep 
    Out-Of-Distribution \sep 
    Randomized Smoothing
\end{keywords}

%%
%% This command processes the author and affiliation and title
%% information and builds the first part of the formatted document.
\maketitle

\section{Introduction \& Related Work}\label{intro}

Although recent advances in Machine Learning (ML) demonstrate its validity in a wide range of applications, its use in safety-critical conditions remains challenging.
Since the appearance of unexpected low robustness to natural~\citep{hendrycks2018benchmarking} and adversarial~\citep{pgd} perturbations to the input data, several types of defenses have been proposed along the years.
Two main branches of defenses exist: \textit{empirical}~\cite{madry2018towards} and \textit{certified}~\citep{randomized_smoothing}, which aim at \textit{improving} or \textit{assuring} the robustness of the prediction in the vicinity of the input, respectively.
\textit{Certified} defenses might give the inaccurate impression that robustness makes ML systems ready for deployment in safety-critical applications.
Unfortunately, further issues lie also beyond robustness, including the lack of guarantees for Out-Of-Distribution (OOD) data, the lack of fairness, or the lack of explainability~\cite{paleyes2022challenges}.

\textbf{OOD Detection.}
With Maximum Softmax Probability (MSP)~\cite{msp} as a baseline method, OOD detection aims to identify inputs that fall outside the scope of the training distribution.
Outlier Exposure (OE)~\cite{oe} trains models to differentiate between in-distribution (ID) and out-of-distribution (OOD) samples. 
Recent approaches include Virtual Outlier Synthesis (VOS)~\cite{vos} and LogitNorm~\cite{logitnorm}. 
VOS adaptively synthesizes virtual outliers, while LogitNorm normalizes the logit vector to reduce overconfidence, using thresholding for OOD detection.

\textbf{Adversarial OOD Detection.}
Other lines of research~\cite{acet, ccu, atom}, focus on providing low confidence for OOD data when perturbed with adversarial noise.
\citet{acet} show that ReLU networks can have arbitrarily high confidence for data that is \textit{far enough} from the training distribution. 
Additionally, they propose ACET~\cite{acet}, an adversarial training method to enforce low confidence on OOD data, but at the cost of decreased ID accuracy.
ATOM~\cite{atom} addresses this issue by using outlier mining techniques to automatically select a diverse set of OOD samples from a large pool of potential OOD samples.

\textbf{Guaranteed OOD Detection.}
Recent studies like \citet{good, prood} bring forth $\ell_\infty$-norm certified robustness for OOD data with a simple but effective method: Interval Bound Propagation (IBP)~\citep{ibp}.
GOOD~\cite{good} proposes a training approach using IBP, but it produce loose bounds, impacting accuracy. 
While ProoD~\cite{prood} combines a certified discriminator and OE model, achieving state-of-the-art performance but with practical limitations: low certified accuracy, reliance on external datasets, and reduced scalability due to IBP's impact on larger models.

\begin{table*}[htb] 
\vspace{-0.5em}
    \centering
    \caption{Comparison between this work and previous methods in terms of ID and OOD robustness properties. In this case, the \checkmark indicates that property was provided in the work. While (\checkmark) indicates that the property is actually lower than expected.}
    \label{tab:contribution}
    \begin{adjustbox}{width=0.8\textwidth,center}
        \begin{tabular}{lcccccccc}
            \toprule
             \multirow{3}{*}{Methods}   &\multicolumn{3}{c}{In-Distribution (ID) Accuracy} &\multicolumn{5}{c}{Out-Of-Distribution (OOD) Detection} \\
             \cmidrule(lr){2-4} \cmidrule(lr){5-9}
             &Clean &Adversarial  &Certified &Clean &Adversarial &\multicolumn{2}{c}{Certified} &Asymptotic \\
             & &$\ell_\infty$ &$\ell_2$ & &$\ell_\infty$ &$\ell_\infty$ &$\ell_2$ &underconfidence \\
            \midrule
            - \textbf{Standard} \\
            OE~\cite{oe}        &\checkmark & & &\checkmark     &   &   &  &  \\
            VOS~\cite{vos}    &\checkmark & & &\checkmark & & & & \\
            LogitNorm~\cite{logitnorm}    &\checkmark & & &\checkmark & & & & \\
            - \textbf{Adversarial} \\
            ACET~\cite{acet} &(\checkmark) &\checkmark & &\checkmark &(\checkmark) & & &   \\
            ATOM~\cite{atom} &(\checkmark) & & &\checkmark     &(\checkmark) & & &  \\
            - \textbf{Guaranteed} \\
            GOOD~\cite{good} &           & & &               &\checkmark  &\checkmark  &  &\checkmark    \\
            ProoD~\cite{prood}  &\checkmark & & &\checkmark &\checkmark &\checkmark & &\checkmark  \\
            DISTRO (Our)        &\checkmark &\checkmark &\checkmark &\checkmark &\checkmark &\checkmark &\checkmark &\checkmark \\
            \bottomrule
        \end{tabular}
    \end{adjustbox}
% \vspace{-1em}
\end{table*}

In this study, we propose a novel technique for certifying OOD detection within the $\ell_2$-norm of the input sample, without requiring the use of binary discriminators or specific training.
This enables us to establish a guaranteed upper bound on the classifier's confidence within a defined region surrounding the input.
Unlike before, certified robust OOD detection can now be computed for standard OOD detection approaches.
Additionally, we incorporate a diffusion denoiser~\cite{nichol2021improved, dds}, which recovers the perturbed images and returns high quality denoised inputs.
This leads to better levels of both adversarial and certified robustness for ID and OOD data. This work and previous methods are compared in \autoref{tab:contribution}.

In summary, our contributions are:
\begin{itemize}
    \item A novel technique to robustly certify the confidence of any classifier within an $\ell_2$-norm on OOD data.
    This technique can be applied to any architecture and does not require additional components, even though it has higher computational costs compared to previous approaches.
    \item A method named DISTRO: \textbf{DI}ffusion denoised \textbf{S}moo\textbf{T}hing for \textbf{R}obust \textbf{O}OD detection.
    This method incorporates a diffusion denoiser model to improve the detection of adversarial and certified OOD samples, while providing high adversarial and certified accuracy for ID data.
\end{itemize}

% Code at \url{https://github.com/FraunhoferIKS/distro}.
\section{Background}\label{sec:background}

We define a \textit{hard} classifier as a function $f : \sR^d \to \set{Y}$ which maps input samples $x \in \sR^d$ to output $y \in \set{Y}$, where $\set{Y} = \{1, \dots, K\}$ is the discrete set of $K$ classes. 
Additionally, we introduce a \textit{soft} version $F : \sR^d \to \prob(\set{Y})$ of $f$, where $\prob(\set{Y})$ is the set of probability distributions over $\set{Y}$.
% Thus, a soft classifier assigns each data point a distribution over classes, rather than just assigning it to a class. 
% This distribution indicates the likelihood that each class will occur.
It is possible to convert any soft classifier $F$ into a hard classifier $f$ by mapping $f(x) = \argmax_{y\in\set{Y}} F(x)_y$.
Additionally, we define as $\set{N}(0, 1)$ the standard Gaussian distribution, as $\Phi(x)$ the Gaussian CDF and as $\Phi^{-1}(x)$ its inverse (or quantile).

\textbf{Robustness Certificates.}
Even though an adversarially-trained network is resilient to attacks created during training, it can still be susceptible to unseen new attacks.
To overcome this problem, certified defenses formally guarantee the stability of the prediction in a neighbourhood of the input. 
In other words, a neural network $f$ is certifiably robust for the input $x \in \mathbb{R}^d$, if the prediction for all perturbed versions $\tilde{x}$ remains unchanged such that $\norm{\tilde{x}-x}_p \leq \epsilon$, where $\norm{\cdot}_p$ is the $\ell_p$-norm around $x$ of size $\epsilon > 0$.
% \textbf{Virtual Outliers Synthesis (VOS)} \cite{vos}.
% The underlying assumption of this method is that the feature representation of input instances forms a class-conditional multivariate Gaussian distribution:
% \begin{equation}
%     \prob( h(\vec x) | y = c) = \set{N}(\symvec{\mu}_c, \vec \Sigma), \quad \forall c\in\set{Y}, 
% \end{equation}
% where $\symvec{\mu}_c$ is the Gaussian mean, $\vec{\Sigma}$ is the tied covariance matrix, and $h(\vec{x}) \in \mathbb{R}^{m}$ is the latent representation of an input $\vec{x}\in\mathbb{R}^n$, with $m \ll n$.
% \citet{vos} propose to sample the virtual outliers from the feature representation space, using the previous 

\textbf{Randomized Smoothing.} 
This robustness verification method \cite{randomized_smoothing} computes the $\ell_2$-norm certificates around an input sample $x$ by counting which class is most likely to be returned when $x$ is perturbed by isotropic Gaussian noise. 
Formally, given a \textit{soft} classifier $F$, randomized smoothing considers a \textit{smooth} version of $F$ defined as:
\begin{equation}\label{eq:smooth_classifier}
    G(x) \eqdef \E_{\delta \sim \set{N}(0, \sigma^2I)}\left[F(x + \delta)\right],
\end{equation}
where $\sigma > 0$ represents the standard deviation.
As previously, we define the hard version of $G(x)$ as $g(x) =\argmax_{y\in\set{Y}} G(x)_y$.
% Contrary to other formal verification methods, randomized smoothing does not make any assumptions regarding the model's properties, allowing certification to be scaled to larger and more complex networks.
\citet{randomized_smoothing} demonstrated that $G$ is robust to perturbations of radius $R$, where the radius $R$ is defined as the difference in probabilities between the most likely class and the second most likely class.
A more general interpretation is given by ~\citet{yang2020randomized}.
\begin{restatable}{lemma}{randomizedsmoothing}[\citet{yang2020randomized}]\label{th:randomized_smoothing}
    Given a smoothed classifier $G$ defined as in \autoref{eq:smooth_classifier}, such that $G(x) = (G(x)_1, \dots, G(x)_K)$ is a vector of probabilities that $G$ assigns to each class $1, \dots, K$. 
    Suppose $G$ predicts class $c$ on input $x$, and the probability is $p = max_{y\in \set{Y}} G(x)_y > 1/2$, then $G$ continues to predict class $c$ when $x$ is perturbed by any $\delta$ with:
    \begin{equation*}
        \norm{\delta}_2 < \sigma \Phi^{-1}(p).
    \end{equation*}
\end{restatable}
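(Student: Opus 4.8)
The plan is to prove something slightly cleaner than the bare $\argmax$ statement: that the smoothed score of the predicted class stays strictly above $1/2$ throughout the ball $\norm{\delta}_2 < \sigma\Phi^{-1}(p)$. This suffices, because each $F(\cdot)$ is a probability vector and expectation is linear, so $\sum_{y\in\set{Y}} G(x+\delta)_y = 1$; hence once I know $G(x+\delta)_c > 1/2$, every competing class satisfies $G(x+\delta)_y \le 1 - G(x+\delta)_c < 1/2 < G(x+\delta)_c$, so $c$ remains the unique maximizer and $g(x+\delta) = c$. The task thus reduces to controlling the single coordinate $G(\cdot)_c$.

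The key technical ingredient is a Lipschitz property of Gaussian smoothing. Writing $h \eqdef F(\cdot)_c : \R^d \to [0,1]$, the coordinate $G(\cdot)_c = \E_{\delta\sim\set{N}(0,\sigma^2 I)}[h(\cdot + \delta)]$ is the convolution of $h$ with the isotropic Gaussian density, and I claim the map $x \mapsto \Phi^{-1}\!\big(G(x)_c\big)$ is $\tfrac{1}{\sigma}$-Lipschitz. Granting this, at any perturbation $\delta$,
\begin{equation*}
    \Phi^{-1}\!\big(G(x+\delta)_c\big) \ge \Phi^{-1}\!\big(G(x)_c\big) - \tfrac{1}{\sigma}\norm{\delta}_2 = \Phi^{-1}(p) - \tfrac{1}{\sigma}\norm{\delta}_2 .
\end{equation*}
When $\norm{\delta}_2 < \sigma\Phi^{-1}(p)$ (well-defined and positive precisely because $p>1/2$ forces $\Phi^{-1}(p)>0$), the right-hand side is strictly positive, and applying the increasing function $\Phi$ gives $G(x+\delta)_c > \Phi(0) = 1/2$. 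By the reduction above, this finishes the argument.

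What remains is the Lipschitz claim, which is the real heart of the proof, and I see two routes. The first is the Neyman--Pearson approach: to bound the change of $G(\cdot)_c$ between $x$ and $x+\delta$ over all $[0,1]$-valued $h$ with fixed $G(x)_c = p$, the worst case is attained by a half-space indicator $h = \mathbbm{1}\{\langle u, \cdot\rangle \le \tau\}$ aligned with $\delta$; for such $h$ the smoothed score collapses to a one-dimensional Gaussian CDF, and a direct computation yields exactly the constant $1/\sigma$ after composing with $\Phi^{-1}$. The second is analytic: differentiate under the integral and use the Gaussian identity $\nabla G(x)_c = \tfrac{1}{\sigma^2}\,\E_{\delta}[\delta\, h(x+\delta)]$, then bound $\norm{\nabla\, \Phi^{-1}(G(x)_c)} = \norm{\nabla G(x)_c}\big/\phi\!\big(\Phi^{-1}(G(x)_c)\big) \le 1/\sigma$, where $\phi$ is the standard normal density, the inequality again being sharpest for a half-space. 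I expect this tight variational bound of $1/\sigma$ to be the main obstacle, since everything after it is a two-line monotonicity argument; the only other points needing care are the role of $p>1/2$ (to make the certified radius positive) and of the simplex constraint $\sum_y G_y = 1$ (to upgrade ``score above $1/2$'' into a strict $\argmax$).
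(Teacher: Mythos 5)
Your proof is correct. Note that the paper itself never proves this lemma: it is imported verbatim from \citet{yang2020randomized}, and your argument is essentially the standard proof found in that cited line of work (Salman et al.'s Lemma on smoothed classifiers, reused by Yang et al.): establish that $x \mapsto \Phi^{-1}\bigl(G(x)_c\bigr)$ is $\tfrac{1}{\sigma}$-Lipschitz via the half-space/Neyman--Pearson extremal argument, conclude $G(x+\delta)_c > \Phi(0) = \tfrac12$ inside the ball, and use the simplex constraint $\sum_y G(x+\delta)_y = 1$ to upgrade this to a strict argmax. All three of your supporting observations check out: $p > \tfrac12$ is exactly what makes the radius $\sigma\Phi^{-1}(p)$ positive, linearity of expectation does preserve the simplex, and either of your two routes to the Lipschitz claim (direct Neyman--Pearson reduction, or the gradient identity $\nabla G(x)_c = \tfrac{1}{\sigma^2}\E_\delta[\delta\, h(x+\delta)]$ followed by the variational bound $\norm{\nabla G(x)_c} \le \tfrac{1}{\sigma}\,\phi\bigl(\Phi^{-1}(G(x)_c)\bigr)$ with half-space worst case) is legitimate and standard. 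One point worth making explicit, since it distinguishes your argument from the machinery the paper does develop: in its proof of Theorem 3.1 the paper only uses the cruder \emph{uniform} Lipschitz bound $\sqrt{2/(\pi\sigma^2)}$ on $G$ itself, and that bound is genuinely too weak to prove this lemma --- it would certify only the radius $\sigma\sqrt{\pi/2}\,\bigl(p-\tfrac12\bigr)$, which stays bounded as $p \to 1$, whereas $\sigma\Phi^{-1}(p) \to \infty$. Your insistence on composing with $\Phi^{-1}$ before invoking Lipschitzness is therefore not a stylistic choice but the essential ingredient, and you correctly identified it as the heart of the proof.
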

%
% Proof is given in \autoref{app:theorem_randomized_smoothing}.
One should consider $p$ as the probability that the smoothed classifier will assign to the predicted class rather than any other.
As a consequence, if $p > 1/2$, it will continue to do so even if the input is perturbed by Gaussian noise of magnitude smaller than the radius $R = \sigma \Phi^{-1}(p)$.
% We observe that as $p$ approaches 1, the radius becomes infinite.
% This makes sense as it only occurs when the original classifier always classify as $c$.

\citet{salman} show that randomized smoothing can postprocess the network to make it locally Lipschitz continuous.
The connection between randomized smoothing and Lipschitz continuity is provided in the following lemma, which offers an analytical form of the gradient of a smooth function.

\begin{restatable}{lemma}{lemmastein}[\citet{stein1981estimation}]\label{lemma:stein}
    Let $\sigma>0$, let $h:\R^d \to \R$ be measurable, and let $H(x) = \E_{\delta \sim \set{N}(0, \sigma^2 I)}[h(x+\delta)]$. Then $H$ is differentiable, and moreover:
    \begin{equation*}
        \nabla H(x) = \frac{1}{\sigma^2}\E_{\delta \sim \set{N}(0, \sigma^2 I)} \left[ \delta \cdot h(x + \delta) \right].
    \end{equation*}
\end{restatable}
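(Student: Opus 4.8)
The plan is to convert the expectation defining $H$ into an explicit Lebesgue integral against the Gaussian density, relocate the dependence on $x$ from the (merely measurable) function $h$ onto the smooth density by a translation change of variables, differentiate under the integral sign, and finally undo the substitution. Concretely, write $p_\sigma(z) = (2\pi\sigma^2)^{-d/2}\exp\!\left(-\norm{z}_2^2/(2\sigma^2)\right)$ for the density of $\set{N}(0,\sigma^2 I)$, so that
\begin{equation*}
    H(x) = \int_{\R^d} h(x+\delta)\, p_\sigma(\delta)\, d\delta = \int_{\R^d} h(z)\, p_\sigma(z-x)\, dz,
\end{equation*}
where the second equality uses the substitution $z = x+\delta$. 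After this step the integrand depends on $x$ only through the infinitely differentiable factor $p_\sigma(z-x)$, which is what makes the differentiation tractable even though $h$ itself need not be differentiable.

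The key computation is the gradient of the translated density. First I would differentiate the Gaussian exponent, obtaining
\begin{equation*}
    \nabla_x\, p_\sigma(z-x) = p_\sigma(z-x)\,\nabla_x\!\left(-\frac{\norm{z-x}_2^2}{2\sigma^2}\right) = \frac{z-x}{\sigma^2}\, p_\sigma(z-x).
\end{equation*}
Granting for the moment that I may exchange $\nabla$ with the integral, this yields $\nabla H(x) = \sigma^{-2}\int_{\R^d} h(z)\,(z-x)\,p_\sigma(z-x)\,dz$, and reversing the substitution $z = x+\delta$ produces exactly $\sigma^{-2}\,\E_{\delta\sim\set{N}(0,\sigma^2 I)}[\delta\cdot h(x+\delta)]$, the claimed formula. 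Differentiability of $H$ falls out of the same argument, since the right-hand side is continuous in $x$.

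The main obstacle is rigorously justifying the interchange of differentiation and integration, i.e. applying the Leibniz rule for parameter-dependent integrals. This requires a domination bound: on a neighbourhood of any fixed $x$ I would bound $\norm{h(z)\,\nabla_x p_\sigma(z-x)}$ by an integrable envelope independent of $x$, using the rapid Gaussian decay of $p_\sigma$ and its gradient to absorb the polynomial factor $z-x$. Strictly speaking the word \emph{measurable} alone is not enough for the integrals to even be finite; one needs $h$ to be bounded (or of sufficiently slow growth relative to the Gaussian tails), after which dominated convergence applies to the difference quotients and legitimizes passing the limit inside. I would remark that in the randomized-smoothing setting this hypothesis is automatic, since there $h$ is a coordinate of a soft classifier and hence takes values in $[0,1]$, so the envelope is simply a constant times $\norm{z-x}_2\, p_\sigma(z-x)$, which is integrable.
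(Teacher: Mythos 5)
Your proof is correct, and it is the standard argument for this identity: pass from $\E_{\delta}[h(x+\delta)]$ to the convolution $\int h(z)\,p_\sigma(z-x)\,dz$, move the gradient onto the smooth Gaussian kernel via $\nabla_x\,p_\sigma(z-x) = \frac{z-x}{\sigma^2}\,p_\sigma(z-x)$, and justify the interchange by dominated convergence on difference quotients. There is nothing in the paper to compare it against line by line: the paper states the lemma with a citation to Stein (1981) and never proves it (the identity is likewise proved by exactly your convolution-differentiation route in the randomized-smoothing literature the paper builds on, e.g.\ \citet{salman}). Two remarks. First, your observation that \emph{measurable} alone is not a sufficient hypothesis is a genuine and correct catch --- as stated, the lemma fails for, say, $h(z)=e^{\norm{z}_2^2/\sigma^2}$, for which $H$ is not even finite; boundedness (or sub-Gaussian growth) of $h$ is needed, and you correctly note this is automatic in the paper's application, where $h$ is a coordinate of a soft classifier valued in $[0,1]$, so the dominating envelope $C\,\norm{z-x}_2\,p_\sigma(z-x)$, made uniform over a ball in $x$, does the job. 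Second, your closing sentence slightly compresses the differentiability claim: the interchange argument yields existence of the partial derivatives equal to the stated integral, and it is the \emph{continuity} of those partials (again by dominated convergence) that upgrades this to total differentiability of $H$; existence of partials alone would not suffice. That is a one-line repair, not a gap, and with it your proof is complete.
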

%Proof is given in \autoref{app:lemma_stein}.
The smoothed function $H$ is also known as the \textit{Weierstrass transform} of $h$, and a classical property of the
Weierstrass transform is its induced smoothness.

\textbf{Diffusion Denoised Smoothing.} 
% A number of recent developments in denoising diffusion probabilistic models have led to state-of-the-art results in image generation~\citep{sohl2015deep, ho2020denoising, nichol2021improved}.
In a nutshell, forward diffusion involves adding Gaussian noise to an image until it produces an isotropic Gaussian distribution with a large variance. 
Denoising diffusion probabilistic models work by learning how to reverse this process.
In formal terms, given an input sample $x \in \R^{d}$, a diffusion model selects a predetermined \textit{timestep} $t\in \mathbb{N}^+$  and samples a noisy image $x_t$ as follows:
\begin{equation}\label{eq:diffusion}
    x_t \eqdef \sqrt{\alpha_t} \cdot x + \sqrt{1 - \alpha_t} \cdot \set{N}(0, I),    
\end{equation}
where the amount of noise to be added to the image is determined by a constant called $\alpha_t$ derived from $t$.

As \citet{salman2020denoised} suggested, denoising Gaussian pertubed images leads to out-of-the-box certified robustness for plain models.
Following this trend~\citet{dds} make use of a diffusion model as one-shot denoiser achieving state-of-the-art performances.
The minor proposed adjustment held in the estimation of $t$, computed such that $\frac{1-\alpha_t}{\alpha_t} = \sigma^2$.
Additionally, the perturbed version $\tilde{x} = x + \delta$ is scaled by $\sqrt{\alpha_t}$, to match the noise model of \autoref{eq:diffusion}.
\section{Certified Robust OOD Detection}\label{sec:method}

This section explains how using local Lipschitz continuity, achieved through smoothing the classifier with Gaussian noise, can guarantee the detection of OOD samples within a $\ell_2$-sphere around the input.

\textbf{Preliminaries.} 
To determine how well a classifier distinguishes between ID and OOD samples, it is common to threshold the confidence level and to calculate the area under the receiver operating characteristic curve (AUROC or AUC).
Formally, let us consider a function\footnote{e.g. the Maximum Softmax Probability~\cite{msp}, or the Energy function~\cite{energy}.} $h\in\R^d \to \R$, the AUC is defined as:
\begin{equation*}
    \text{AUC}_h (\set{D}_{in}, \set{D}_{out}) = \E_{
    \begin{subarray}{l} x \sim \set{D}_{in}, \\ 
    z \sim \set{D}_{out}\end{subarray}} 
    \left[\mathbbm{1}_{h(x) > h(z)}\right],
\end{equation*}
where $\set{D}_{in}, \set{D}_{out}$ are ID and OOD data sets, respectively, and $\mathbbm{1}$ returns 1 if the argument is true and 0 otherwise.
A number of prior works~\cite{ccu, good, atom, prood} also investigated the worst-case AUC (WCAUC), which is defined as the lowest AUC attainable when every OOD sample is perturbed so that the highest level of confidence is achieved within a specific threat model.
Specifically, the WCAUC is defined as:
\begin{equation*}
    \text{WCAUC}_h (\set{D}_{in}, \set{D}_{out}) = \E_{
    \begin{subarray}{l} x \sim \set{D}_{in}, \\ 
    z \sim \set{D}_{out}\end{subarray}}
    \left[\mathbbm{1}_{h(x) > \underset{\norm{\tilde{z}-z}_p \leq \epsilon}{\max} h(\tilde{z}) } \right].
\end{equation*}

Due to the intractable nature of the maximization problem, we can compute upper or lower bounds only, i.e. $\underline{h}(z) \leq \max_{\norm{\tilde{z}-z}_p \leq \epsilon} h(\tilde{z}) \leq \bar{h}(z)$.
The lower bound $\underline{h}(z)$ is typically calculated using projected gradient methods~\citep{pgd, apgd} and named Adversarial AUC (AAUC) (upper bound of WCAUC).
In the context of $\ell_\infty$-norm, the upper bound $\bar{h}(z)$, called Guaranteed AUC (GAUC) (lower bound of WCAUC), is computed using IBP in \citet{good} and \citet{prood}.

Here, we propose a method for computing the upper bound of any classifier without the need for special training or modifications.
Thus, the main theorem for an $\ell_2$-norm robustly certified upper bound is stated.

\begin{restatable}{theorem}{upperbound}\label{th:upper_bound}
Let $F:\sR^d \to \prob{(\set{Y})}$ be any soft classifier and $G$ be its associated smooth classifier as defined in ~\autoref{eq:smooth_classifier}, with $\sigma > 0$.
If $p = \max_{y\in\set{Y}} G(x)_y > 1/2$, then, we have that:
\begin{equation}
    \max_{y\in \set{Y}} G(x+\delta)_y \leq \sqrt{\frac{2}{\pi}} \Phi^{-1}(p) + p,
\end{equation}
for every $\norm{\delta}_2 < \sigma \Phi^{-1}(p)$.
\end{restatable}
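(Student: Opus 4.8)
The plan is to reduce the statement to a Lipschitz bound on a single coordinate of $G$ and then integrate along the segment joining $x$ to $x+\delta$. First I would let $c = \argmax_{y\in\set{Y}} G(x)_y$ be the top class at $x$, so that $G(x)_c = p > 1/2$. By \autoref{th:randomized_smoothing}, every $\delta$ with $\norm{\delta}_2 < \sigma\Phi^{-1}(p)$ leaves the prediction unchanged, so $c$ remains the argmax at $x+\delta$ and hence $\max_{y\in\set{Y}} G(x+\delta)_y = G(x+\delta)_c$. This is the key reduction: it suffices to upper bound how much the single scalar function $z \mapsto G(z)_c$ can grow as $z$ moves from $x$ to $x+\delta$.

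The core step is a global gradient bound on $G(\cdot)_c$. Since $F(\cdot)_c$ is measurable and takes values in $[0,1]$, \autoref{lemma:stein} applies with $h = F(\cdot)_c$ and gives $\nabla G(z)_c = \frac{1}{\sigma^2}\E_{\delta\sim\set{N}(0,\sigma^2 I)}[\delta\, F(z+\delta)_c]$. For any unit vector $u$, I would bound
\begin{equation*}
    u \cdot \nabla G(z)_c = \frac{1}{\sigma^2}\E_{\delta}\big[(u\cdot\delta)\,F(z+\delta)_c\big] \le \frac{1}{\sigma^2}\E_{\delta}\big[\,|u\cdot\delta|\,\big],
\end{equation*}
using $0 \le F(z+\delta)_c \le 1$. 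Because $u\cdot\delta \sim \set{N}(0,\sigma^2)$, its first absolute moment is $\E|u\cdot\delta| = \sigma\sqrt{2/\pi}$, so $u\cdot\nabla G(z)_c \le \sqrt{2/\pi}/\sigma$. Taking the supremum over $u$ yields the uniform bound $\norm{\nabla G(z)_c}_2 \le \frac{1}{\sigma}\sqrt{2/\pi}$ for every $z$, i.e. $G(\cdot)_c$ is $\tfrac{1}{\sigma}\sqrt{2/\pi}$-Lipschitz.

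Finally I would integrate this gradient bound along the line segment from $x$ to $x+\delta$, applying Cauchy--Schwarz inside the integral:
\begin{equation*}
    G(x+\delta)_c - p = \int_0^1 \nabla G(x+t\delta)_c \cdot \delta \, dt \le \frac{1}{\sigma}\sqrt{2/\pi}\,\norm{\delta}_2.
\end{equation*}
Substituting the certified radius $\norm{\delta}_2 < \sigma\Phi^{-1}(p)$ cancels the $\sigma$ and gives $G(x+\delta)_c - p < \sqrt{2/\pi}\,\Phi^{-1}(p)$, which together with the reduction $\max_{y} G(x+\delta)_y = G(x+\delta)_c$ is exactly the claimed inequality.

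I expect the main obstacle to be the gradient bound itself: justifying the interchange of differentiation and expectation (handled cleanly by invoking \autoref{lemma:stein} rather than differentiating by hand) and recognizing that combining $F(\cdot)_c \le 1$ with the Gaussian absolute moment $\E|u\cdot\delta| = \sigma\sqrt{2/\pi}$ is precisely what produces the constant $\sqrt{2/\pi}$. A secondary point worth stating explicitly is that \autoref{th:randomized_smoothing} is needed not merely to supply a robustness radius but to guarantee the argmax coordinate does not switch within that radius, so that the scalar bound on $G(\cdot)_c$ controls the full quantity $\max_{y} G(x+\delta)_y$.
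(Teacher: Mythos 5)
Your proof is correct and follows essentially the same route as the paper's: both invoke \autoref{lemma:stein} for the gradient of the Weierstrass transform, combine $F(\cdot)_c \le 1$ with the Gaussian first absolute moment $\E|\langle u,\delta\rangle| = \sigma\sqrt{2/\pi}$ to obtain the $\tfrac{1}{\sigma}\sqrt{2/\pi}$-Lipschitz constant in $\ell_2$, and then apply this over $\norm{\delta}_2 < \sigma\Phi^{-1}(p)$ so that $\sigma$ cancels. The only difference is cosmetic: you fix the top class $c$ and use \autoref{th:randomized_smoothing} to keep the argmax stable, whereas the paper applies the Lipschitz bound to the maximum directly (which is also valid, since a pointwise maximum of $L$-Lipschitz functions is $L$-Lipschitz), so your extra invocation of the certification lemma is sound but not strictly needed.
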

\begin{proof}
As a prerequisite to proving the theorem, we need to know the analytic form of the gradient of a smoothed function given in \autoref{lemma:stein}.
Let us consider the soft classifier $F(x):\R^d \to \prob(\set{Y})$, and its smooth version $G(x) = \E_{\delta \sim \set{N}(0, \sigma^2 I)} [F(x + \delta)]$, with $\sigma > 0$.
Since $F$ its a measurable function, we consider the \textit{Weierstrauss} transform of $F$ (which coincide with the \textit{smooth} version of $F$):
\begin{equation*}
    \E_{\delta \sim \set{N}(0, \sigma^2I)}\left[F(x+\delta)\right] = \left(F * \set{N}(0, \sigma^2I)\right)(x),
\end{equation*}
where $*$ denotes the convolution operator.
Thus, $G(x)$ is differentiable and from \autoref{lemma:stein} we have:
\begin{equation*}
    \nabla G(x) = \frac{1}{\sigma^2}\E_{\delta \sim \set{N}(0, \sigma^2 I)} \left[ \delta \cdot h(x + \delta) \right].
\end{equation*}
Since $F:\R^d \to [0, 1]$ and $\ell_2$ is self-dual, it is sufficient to show that the gradients of $G$ are bounded in $\ell_2$. 
From \autoref{lemma:stein}, for any unit vector $v\in \R^d$ we have that $|\langle v, \nabla G(x) \rangle|$ is equal to:
\begin{equation*}
    \begin{aligned}
    &\left\lvert \frac{1}{(2\pi \sigma^2)^{\nicefrac{d}{2}}} \int_{\sR^{d}} F(t) \left\langle v, \frac{t - x}{\sigma^2} \right\rangle e^{\left(-\frac{1}{2\sigma^2} \norm{x-t}_2^2 \right)} dt \right\rvert, \\
    &\leq \frac{1}{(2\pi \sigma^2)^{\nicefrac{d}{2}}} \int_{\sR^{d}} \left\lvert \left\langle v, \frac{t - x}{\sigma^2} \right\rangle \right\rvert e^{\left(-\frac{1}{2\sigma^2} \norm{x-t}_2^2 \right)} dt,  \\
    \end{aligned}
\end{equation*}
where we make use of the triangle inequality and know that $F$ is bounded by 1.
Given that projections of Gaussians are Gaussians and from the classical integration of the Gaussian density, we obtain:
\begin{equation*}
    \begin{aligned}
        &\frac{1}{(2\pi \sigma^2)^{\nicefrac{d}{2}}} \int_{\sR^{d}} \left\lvert \left\langle v, \frac{t - x}{\sigma^2} \right\rangle \right\rvert e^{\left(-\frac{1}{2\sigma^2} \norm{x-t}_2^2 \right)} dt, \\
        =& \frac{1}{\sigma^2}\E_{Z\sim \set{N}(0, \sigma^2)} \left[\lvert Z \rvert\right] = \sqrt{\frac{2}{\pi\sigma^2}},
    \end{aligned}
\end{equation*}
where we consider the supremum over all unit vectors $v$.
Since, we know that $G(x)$ is $\sqrt{\frac{2}{\pi\sigma^2}}$-Lipschitz in $\ell_2$, it is possible to use the Lipschitz constant to bound the difference between $G(x + \delta)$ and $G(x)$ for any value of $\delta$, with $\norm{\delta}_2 < \sigma \Phi^{-1}(p)$, where $p = \max_{y\in\set{Y}} G(x)_y$.
Formally:
\begin{equation*}
    |G(x + \delta)| \leq \sqrt{\frac{2}{\pi\sigma^2}} \norm{\delta}_2 + |G(x)|,
\end{equation*}
where we make use of the reverse triangle inequality.
Since $G(x):\sR^d \to [0, 1]$, we can assume $|G(x)|=G(x)$, and moreover:
\begin{equation*}
    \max_{y \in \set{Y}} G(x + \delta)_y \leq \sqrt{\frac{2}{\pi}}\Phi^{-1}(p) + \max_{y \in \set{Y}} G(x)_y.
\end{equation*}

\end{proof}
In other words, if the smooth classifier assigns the most likely class more than half the time, it is locally Lipschitz continuous in $x$, and its maximum prediction is bounded within a radius smaller than $R = \sqrt{\frac{2}{\pi}} \Phi^{-1}(p)$.

\subsubsection*{Discussion}

While this theorem provides some advantages, it is important to note a couple of its limitations.
One of the main limitations is that the upper bound of the smooth classifier $G$ only applies to $G$ and not to the original classifier $F$. 
As a result, the guarantee only applies to $G$, and its robustness at a given input point $x$ cannot be precisely evaluated or certified. 
To overcome this, Monte Carlo algorithms can be used to approximate these evaluations with high probability~\cite{randomized_smoothing}.

Another limitation is that the guarantees provided by this theorem are only probabilistic in practice. 
Therefore, a hypothesis test~\cite{hypothesis_test} should be used to avoid making predictions with low confidence. 
As with randomized smoothing~\cite{randomized_smoothing}, a large number of samples must be generated in order to achieve high levels of confidence in the certification radius. 
However, generating these samples can be computationally expensive for complex models.

Despite these limitations, the theorem provides a novel way of calculating the upper bound of any classifier, without the need for special training or modification.
Additionally, we provide a tighter certificate compared to previous approaches \cite{good, prood}, as they used IBP.
This can be useful for evaluating the certified robustness of a broader category of standard OOD detection methods as well as larger models, where IBP bounds explode in size and make them unusable~\cite{hurt_training}.

\section{DISTRO: DIffusion denoised SmooThing for Robust OOD detection}\label{sec:distro}

In this section, we present our method.
Essentially, it combines three techniques: (i) a diffusion denoiser, (ii) a standard OOD detector, and (iii) a certified binary discriminator.
Each component of this method is designed to overcome a specific problem of ordinary classifiers, as they are not robust to adversarial attacks, either ID or OOD, and do not detect OOD inputs well.

\tikzstyle{denoises} = [draw, fill=lightgray!20, text width=10em, text centered, minimum height=2em, rounded corners]
\tikzstyle{classifier}=[draw, fill=lightgray!20, text width=6em, text centered, minimum height=2em, rounded corners]
\tikzstyle{discriminator}=[draw, fill=lightgray!20, text width=6em, text centered, minimum height=2em, rounded corners]
\tikzstyle{ann} = [right]

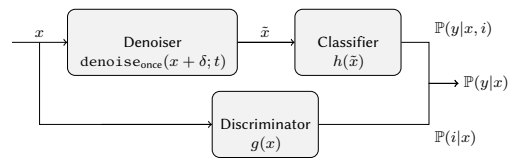
\begin{figure}[ht]
    \begin{adjustbox}{width=0.45\textwidth,center}
    \begin{tikzpicture}
        \node (denoise) [denoises] {\begin{center}
            Denoiser\\ 
            $\texttt{denoise}_{\text{once}}(x + \delta; t)$ \end{center}};

        \path (denoise.east)+(4, -0.75) node (output) [ann] {$\prob(y \lvert x)$};
        \path (denoise.east)+(2, 0) node (classifier) [classifier] 
        {\begin{center} Classifier\\ $h(\tilde{x})$\end{center}};
        
        \path (denoise.east)+(0.5, -1.5) node (discriminator) [discriminator] 
        {\begin{center} Discriminator \\ $g(x)$ \end{center}};

        \path [draw, ->] (denoise.west)+(-1, 0) -- node (input) [above] {$x$} (denoise.west);
        \draw[->] (denoise.east)+(1, 0) -- node (input_classifier) [above] {$\tilde{x}$} (denoise.east) -- (classifier.west);
        
        \draw[->] (denoise.west)+(-0.5, 0) |- (discriminator.west)+(-2, 0) --  (discriminator.west);
        
        \path [draw, ->] (classifier.east) -| ([xshift=5mm] classifier.east) node[above right] {$\prob(y \lvert x, i)$} |- (output.west);
        \path [draw, ->] (discriminator.east) -| ([xshift=20mm] discriminator.east) node[below right] {$\prob(i \lvert x)$} |- (output.west);

    \end{tikzpicture}
    \end{adjustbox}
    \caption{Overview of DISTRO.}
    \label{fig:overview}
    % \vspace{-1em}
\end{figure}

In \autoref{fig:overview}, we show an overview of DISTRO. 
First, a diffusion denoiser is employed before the classifier itself to provide robustness against ID attacks. 
As a result, adversarial noise introduced by the attack is mitigated by the denoiser.
This technique has already been proven to be very efficient and does not affect clean accuracy \cite{dds}. 

\begin{figure*}
\vspace{-0.5em}
    \begin{subfigure}{.45\textwidth}
        \centering
        \includegraphics[width=\textwidth]{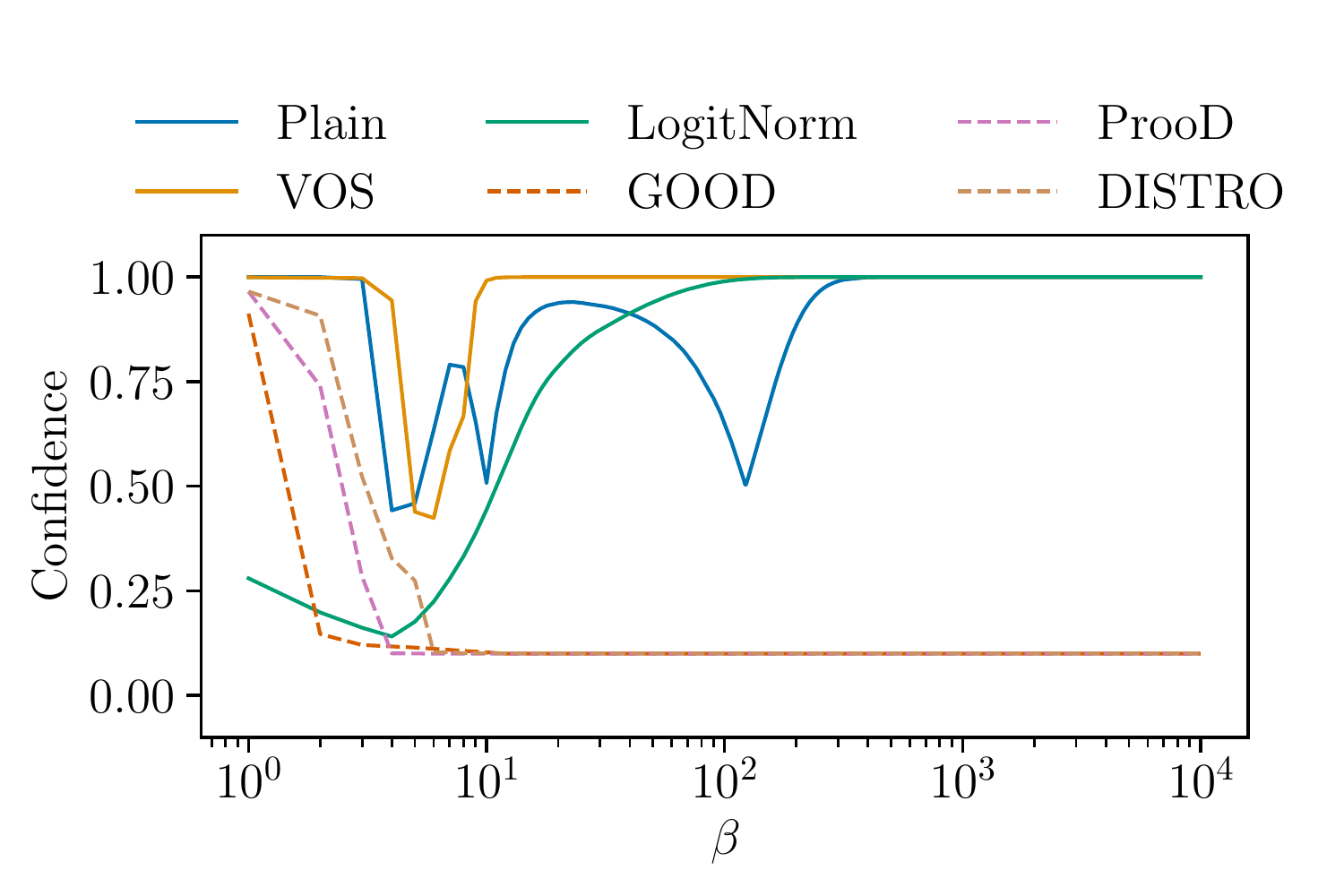}
        \caption{MSP}
        \label{fig:sub_confidence_msp}
    \end{subfigure}
    \hfill
    \begin{subfigure}{.46\textwidth}
        \centering
        \includegraphics[width=\textwidth]{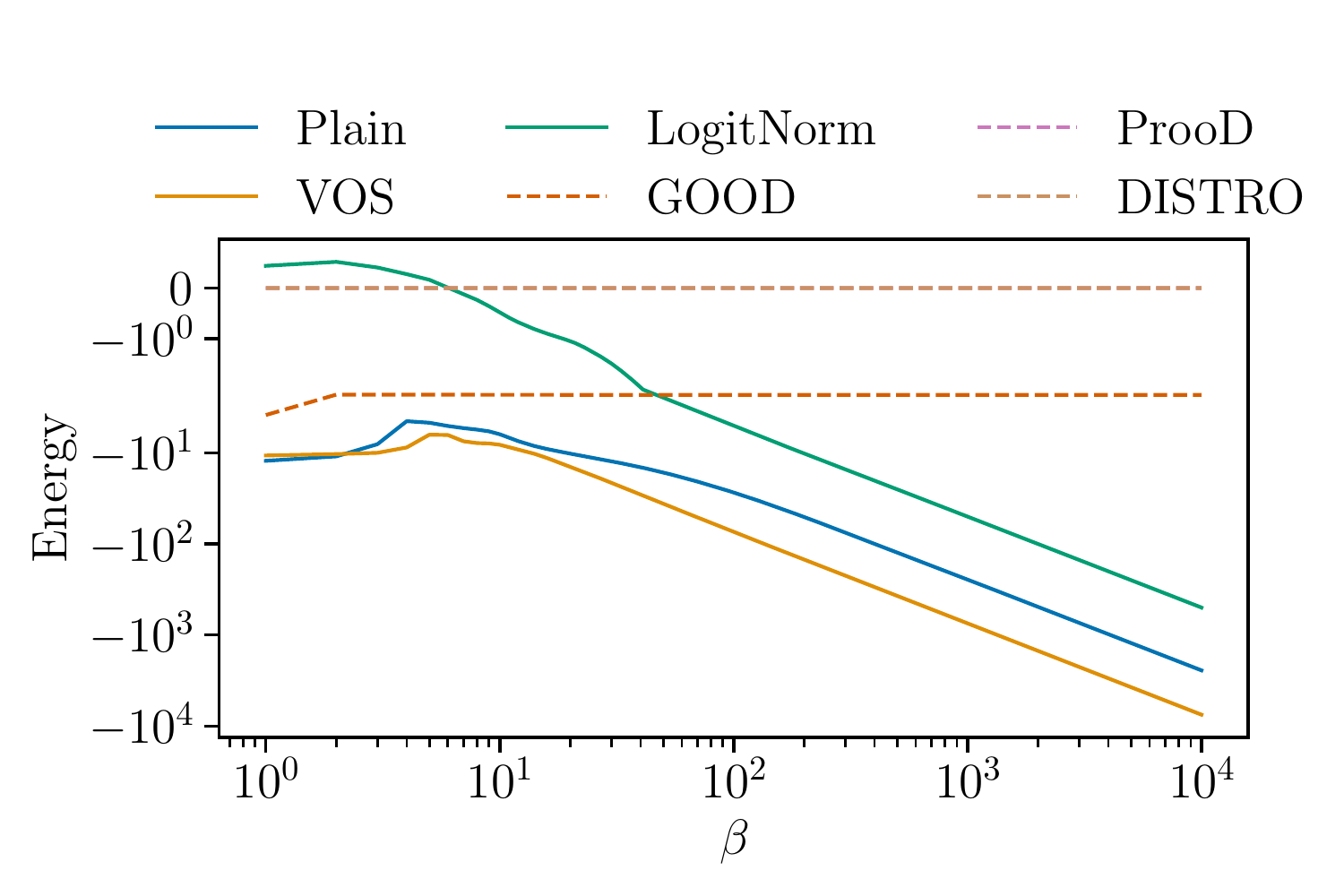}
        \caption{Energy}
        \label{fig:sub_confidence_energy}
    \end{subfigure}
    \vspace{-0.5em}
    \caption{Asymptotic confidence as: (a) MSP~\cite{msp} and (b) Energy~\cite{energy}, for several OOD detection models divided into two categories: \textit{standard} (continuous line) and \textit{guaranteed} (dashed line).}
    \label{fig:asymptotic_confidence}
% \vspace{-2em}
\end{figure*}

Secondly, numerous post-hoc OOD detection methods exist. 
The most straightforward being MSP~\cite{msp}, which can be added to the image classifier without retraining or fine-tuning. 
Alternatively, standard OOD detection methods, such as OE~\cite{oe}, VOS~\cite{vos} or LogitNorm~\cite{logitnorm}, could also replace the classifier.
Thirdly, to make the model more robust to OOD adversarial attacks, we add a binary discriminator to the model that is trained to be certifiably robust against OOD attacks. 
Additionally, this discriminator is combined with the OOD detection method from (ii) which is necessary to have the property of asymptotic underconfidence for far-OOD inputs.

%The idea is to apply diffusion denoising models in combination with robust OOD detection methods to improve adversarial and certified robustness for ID accuracy and OOD detection, and without sacrificing clean accuracy.

\textbf{Configuration.}
This method does not require any new technical knowledge. 
We begin by making the assumption that OOD samples are unrelated and thus maximally un-informative to the ID data.
Thus, for every class $y \in \set{Y}$, the conditional distribution on the input $x$ is given as:
\begin{equation}\label{eq:joint_prob}
    \prob(y|x) = \prob(y|x,i)\prob(i|x) + \frac{1}{K}(1-\prob(i|x)),
\end{equation}
where $\prob(i|x)$ is the conditional distribution representing the probability that $x$ is part of the ID, while $\prob(y|x,i)$ is the conditional distribution representing the ID.
Similarly to \citet{prood}, we assign independent models to each distribution:
\begin{itemize}
    \item $\prob(y|x,i) = h(\mathtt{denoise}_{\text{once}}(x+\delta; t))$, where $h:\R^d\to [0, 1]$ is the confidence of the main classifier $F(x)$, and $\tilde{x} = \mathtt{denoise}_{\text{once}}(x+\delta; t)$ represents one single step of denoising operation with $\delta \sim \set{N}(0, \sigma^2 I)$.
    \item $\prob(i|x) = \frac{1}{1+e^{-g(x)}}$, where $g:\R^d\to \R$ refers to a binary discriminator trained in a certified robust manner based on an $\ell_\infty$-threat model as in \citet{good, prood}.
\end{itemize}

As can be seen, the denoiser is the main addition. 
The one-step denoiser $\mathtt{denoise}_{\text{once}}$ estimates the fully denoised image $x$ from the current timestep $t$. 
Then it computes the average between the denoised image and the noisy image from the previous timestep.
As discussed in ~\citet{dds}, multiple applications of the denoiser will only destroy information about $x$.
Denoising with iterative steps essentially transfers the classification task to the denoiser, which can determine how the image should be filled.
For these reason, we apply only a single step of denoising.

\textbf{Asymptotic Underconfidence.}
Here, we show that by coupling a classifier trained to be OOD aware with a diffusion denoiser and running a certified discriminator in parallel, we can guarantee asymptotic underconfidence for data \textit{far enough} from the training distribution.

To obtain asymptotic underconfidence of the joint classifier, we consider $\prob(y|x,i) \leq 1$ and rewrite ~\autoref{eq:joint_prob} as follows:
\begin{equation}
    \prob(y|x) \leq \frac{K-1}{K} \prob(i|x)+\frac{1}{K}.
\end{equation}
Since the right term only depends on $\prob(i|x)$, we just need to assure that $\lim_{\beta \to \infty}\prob(i|\beta x) \to 0$.
If we employ a certified binary discriminator, trained with IBP on OOD data, as descibed in \citet{prood}, to compute $\prob(i|x)$, we achieve asymptotic underconfidence independently of the main classifier.
Readers are referred to \citet{prood} for a more detailed explanation.

\textbf{Empirical Evaluation.}
In \autoref{fig:asymptotic_confidence}, we show an empirical evaluation of the asymptotic confidence for standard and robust OOD detection methods\footnote{the models are described in \autoref{sec:experiments}.}.
In this test, we consider a single ID sample $x$ and multiply by a scalar $\beta$.
In \autoref{fig:sub_confidence_msp} we plot the MSP~\cite{msp} as confidence, while in \autoref{fig:sub_confidence_energy} we plot the Energy~\cite{energy} for increasing values of $\beta > 0$.
In the context of MSP, we observe that standard OOD detection methods are asymptotically overconfident, after a small drop, whereas certified methods such as GOOD~\cite{good}, ProoD~\cite{prood} and DISTRO converge to $1/K$.
On the other hand, for Energy as $\beta$ increases, VOS~\cite{vos}, LogitNorm~\cite{logitnorm}, and Plain models asymptotically decrease, whereas GOOD~\cite{good}, ProoD (Meinke at al., 2022), and DISTRO remain stable. 

As a result, underconfidence can be easily obtained when using an energy score instead of MSP, regardless of whether it is on a plain or OOD aware model. 
However, asymptotic underconfidence does not necessarily imply that the model will perform better in detecting OOD samples since all inputs are usually normalized to some range (e.g. [0, 1] or [-1, 1]). 
Thus the choice of MSP over the energy function is directly related to the possibility of certified robustness for OOD samples.
\section{Experiments}\label{sec:experiments}

In this section, DISTRO is evaluated for a variety of robust ID and OOD tests and is compared to previous approaches.
As baseline, we consider the pre-trained models\footnote{\href{https://github.com/AlexMeinke/Provable-OOD-Detection}{https://github.com/AlexMeinke/Provable-OOD-Detection}} from \citet{prood}.
The normal trained (\textbf{Plain}) and outlier exposure (\textbf{OE})~\cite{oe} models share the same ResNet18~\cite{resnet} architecture and hyperparameters as \textbf{ProoD}~\cite{prood}.
\textbf{GOOD}~\cite{good} uses a 'XL' convolutional neural network.
Additionally, we evaluate the pretrained DenseNet101~\cite{densenet} models for \textbf{ATOM}~\cite{atom} and \textbf{ACET}~\cite{acet}; and the standard OOD detection methods: \textbf{VOS}\footnote{\href{https://github.com/deeplearning-wisc/vos}{https://github.com/deeplearning-wisc/vos}}~\cite{vos} and \textbf{LogitNorm}\footnote{\href{https://github.com/hongxin001/logitnorm_ood}{https://github.com/hongxin001/logitnorm\_ood}}~\cite{logitnorm} with the pretrained WideResNet40~\cite{wideresnet} models provided in the respective works.
We consider \textbf{DDS}~\cite{dds} with a pre-trained diffusion model\footnote{\href{https://github.com/openai/improved-diffusion}{https://github.com/openai/improved-diffusion}} from \citet{nichol2021improved} in front of the OE classifier.
With \textbf{DISTRO}, we incorporate the same pre-trained diffusion model of DDS before the main classifier of ProoD, and maintain its discriminator.
The diffusion models have been used with the settings described in \citet{dds}.
In the context of $\ell_\infty$, we set $\sigma = \sqrt{d} \cdot \epsilon$.

We evaluate all methods on the standard datasets \texttt{CIFAR10/100}~\cite{cifar} as ID.
For the OOD detection evaluation we consider the following set of datasets: 
\texttt{CIFAR100/10}, \texttt{SVHN}~\cite{svhn}, LSUN~\cite{lsun} cropped (\texttt{LSUN\_CR}) and resized (\texttt{LSUN\_RS}),  TinyImageNet~\cite{tiny} cropped (\texttt{TinyImageNet\_CR}), \texttt{Textures}~\citep{textures} and synthetic (\texttt{Gaussian} and \texttt{Uniform}) noise distributions.
We use a random but fixed subset of 1000 images for all datasets considered as a test for OOD.
For ID, we consider the entire dataset.
We run all our experiments on a single NVIDIA A100. 

\subsection{In-Distribution Results}\label{sec:id-results}

Here, we compare clean, adversarial, and certified accuracy for ID samples.
Adversarial accuracy is evaluated with AutoAttack~\citep{apgd} for $\ell_\infty$-norm attacks of budget $\epsilon \in \{\nicefrac{2}{255}, \nicefrac{8}{255}\}$.
We ran the standard version of AutoAttack without additional hyper-parameters. 
Certified accuracy is evaluated for $\ell_2$-norm robustness of deviation $\sigma \in \{0.12, 0.25\}$.
To this end, random smoothing is performed on 10'000 Gaussian distributed samples around the input with a failure probability of $0.001$.
All $R>0$ are considered for the certified accuracy.
In the context of DISTRO and DDS we run 100 evaluation of the entire test set of \texttt{CIFAR10} to estimate the clean accuracy and report the average.
Further, we ran AutoAttack in both \textit{rand} and \textit{standard} modes, and considered the lowest results for DISTRO and DDS.

\begin{table}[htb]
\vspace{-0.5em}
    \centering
    \caption{\textbf{ID Accuracy}: Results of clean, adversarial and certified accuracy (\%) on the \texttt{CIFAR10} test set.
    The grayed-out models have an accuracy drop greater than $3\%$ relative to the model with the highest accuracy.}
    \label{tab:in-distribution}
    \begin{adjustbox}{width=0.5\textwidth,center}
        \begin{tabular}{llccccc}
            \toprule
            \multirow{2}{*}{Method} &\multirow{2}{*}{Clean} &\multicolumn{2}{c}{Adversarial ($\ell_\infty$)} &\multicolumn{2}{c}{Certified ($\ell_2$)} \\
            & &$\epsilon = \nicefrac{2}{255}$ &$\epsilon = \nicefrac{8}{255}$ &$\sigma=0.12$ &$\sigma = 0.25$ \\
            \midrule
            Plain$^*$       &95.01  &2.16   &0.00   &28.14  &14.17 \\
            OE$^*$          &95.53 &1.97   &0.00   &31.48  &10.88 \\
            VOS$^\dag$      &94.62  &2.24   &0.00   &13.13   &10.02       \\
            LogitNorm$^\ddag$  &94.48  &2.65   &0.00   &12.53  &10.25 \\
            \gray{ATOM$^*$}    &\gray{92.33}  &\gray{0.00}   &\gray{0.00}   &\gray{0.00}   &\gray{0.00}  \\
            \gray{ACET$^*$}    &\gray{91.49}  &\gray{69.01}  &\gray{6.04}   &\gray{57.13}  &\gray{12.48} \\
            \gray{GOOD$^*_{80}$} &\gray{90.13}  &\gray{11.65}  &\gray{0.23}   &\gray{17.33}  &\gray{10.31} \\
            ProoD$^*$ $\Delta=3$  &95.46  &2.69   &0.00   &33.92  &13.50 \\
            DDS                   &\textbf{95.55} &72.97 &24.09 &82.26 &64.58 \\
            DISTRO (our)          &95.47  &\textbf{73.34} &\textbf{27.14}  &\textbf{82.77}   &\textbf{65.63} \\
            \bottomrule
        \end{tabular}
    \end{adjustbox}
    \scriptsize{$*$ Pre-trained models from \citet{prood}, $\dagger$ Pre-trained from \citet{vos}, \\ $\ddag$ Pre-trained from \citet{logitnorm}.
    }
\vspace{-2em}
\end{table}

In \autoref{tab:in-distribution}, we show the results.
As expected, Plain and OE are not robust to adversarial attacks.
This applies to ProoD as well, since OE is its primary classifier.
Similarly, standard OOD detection methods, as LogitNorm and VOS, show poor robustness for ID data.
GOOD demonstrates better results than ProoD for adversarial attacks and worse in terms of certified accuracy.
Suprisingly, ACET reveals strong adversarial and certified accuracy despite of its reduced clean accuracy.
Meanwhile, ATOM results in zero for all tests since any slight perturbation of the input triggers the last neuron used for OOD detection.

\subsubsection*{Discussion}

It is clear that diffusion models can enhance adversarial and certified robustness while maintaining high clean accuracy.
As diffusion introduces variance into gradient estimators, standard attacks become much less effective.
Nevertheless, robustness accuracy of diffusion models varies over different runs for the same input, so it should be defined differently from deterministic accuracy, e.g. as expectation.
Luckily, one-shot diffusion introduces such a tiny variance that throughout a few of runs, our results were similar.

\subsection{Evaluation Metrics}

To discriminate between ID and OOD samples, we use the confidence of the classifier, i.e. MSP~\cite{msp}. 
Traditionally, the following metrics are used to evaluate the OOD detection performance: 
(i) false positive rate (FPR95) of OODs when ID samples have a 95\% true positive rate;
(ii) the area under the receiver operating characteristic curve (AUROC or AUC); and
(iii) the area under the precision-call curve (AUPR).
In order to determine robustness, we compare adversarial (AAUC, AAUPR, AFPR) and guaranteed (GAUC, GAUPR, GFPR) versions of the previous metrics.
For the adversarial metrics, we use the settings in \citet{prood} to ensure a fair comparison.

\begin{figure}[htb]
% \vspace{-1em}
    \centering
    \includegraphics[width=0.45\textwidth]{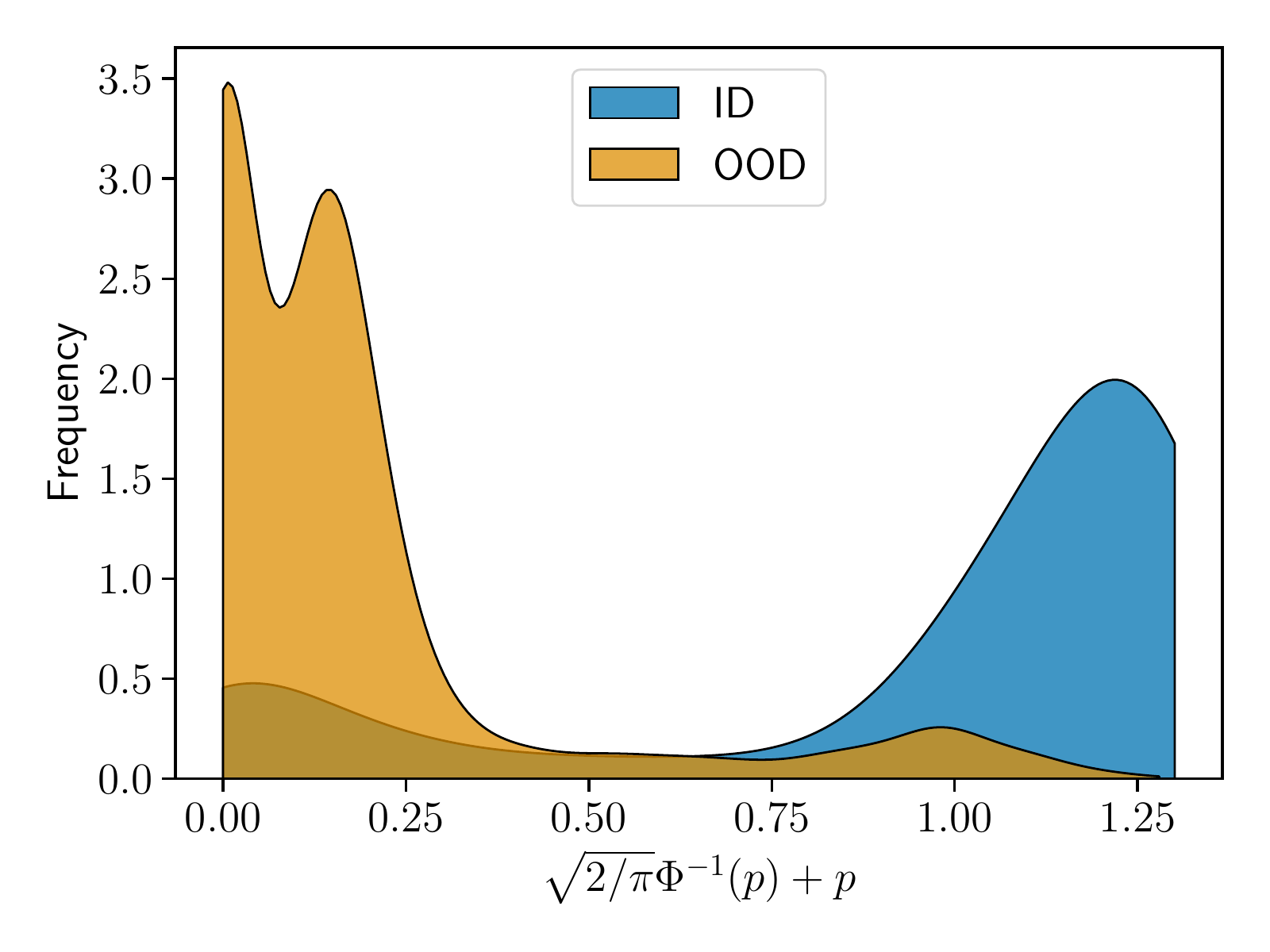}    
    \vspace{-1em}
    \caption{Kernel density estimation (bandwidth = 1) of the distribution of certified smooth ($\sigma = 0.12$) scores for DISTRO on ID (\texttt{CIFAR10}) and OOD (all other datasets) samples.}
    \label{fig:distribution_of_certified_scores}
    % \vspace{-1em}
\end{figure}

\textbf{Guaranteed.}
The guaranteed metrics (GAUC, GAUPR and GFPR) are computed for $\ell_2$ and $\ell_\infty$ norms robustness certificates.
Similarly to \citet{prood}, the $\ell_\infty$-norm is obtained with IBP only on OOD data.
On the other hand, the $\ell_2$-norm is computed with \autoref{th:upper_bound} on both ID and OOD data.
Similarly to \autoref{sec:id-results}, we sampled 10'000 Gaussian data points around the input with a deviation $\sigma = 0.12$. 
Since, the certified bound is only probabilistic in practice, we ran a binomial proportion confidence test~\cite{binomial_test} with failure probability of $0.001$.
We have assigned a score of 0 to all samples that fail to be certified, i.e. with $p \leq 1/2$. 
The Lipschitz continuity does not hold in the case of non-certified samples, therefore we are unable to bound the score.
To ensure a fair comparison, we decided to compute the $\ell_2$-norm GAUC on both ID and OOD datasets.

In \autoref{fig:distribution_of_certified_scores}, we plot the normalized frequency of occurrences of the certified upper bound ($\sqrt{2/\pi} \cdot \Phi^{-1}(p) + p$) for ID versus OOD data of DISTRO.
We observe that OOD data tend to peak close to zero, while ID data are spread out with larger values. 
This suggests that a large radius is more likely to be associated with ID data versus OOD samples.
As a result, robustly certifying the detection of OOD samples becomes more feasible.

\subsection{Out-Of-Distribution Results}

\begin{table*}[htb]
\vspace{-0.5em}
    \centering
    \caption{\textbf{Robust OOD detection.} We consider the following metrics: clean top-1 accuracy on \texttt{CIFAR10/100} test sets, clean AUC, guaranteed (GAUC), adversarial AUC (AAUC), clean AUPR, guaranteed AUPR (GAUPR), adversarial AUPR (AAUPR), clean FPR\@95\% (FPR), guaranteed FPR\@95\% (GFPR) and adversarial FPR\@95\% (AFPR). 
    Averaging was performed on a variety of OOD datasets. We consider MSP~\cite{msp} for all methods and metrics (with temperature $T=1$). The guaranteed $\ell_2$-norm is computed for $\sigma = 0.12$ for all $R>0$, while the adversarial and guaranteed $\ell_\infty$-norm are computed for $\epsilon = 0.01$. The grayed-out models have an accuracy drop greater than $3\%$ relative to the model with the highest accuracy. \textbf{Bold} numbers are superior results.}
    \label{tab:ood_average}
    \begin{adjustbox}{width=\textwidth,center}
    \begin{tabular}{lr|rrrr|rrrr|rrrr}
    \toprule
        ID: CIFAR10 &Acc. &AUC$\uparrow$ &\multicolumn{2}{c}{GAUC$\uparrow$} &AAUC$\uparrow$ &AUPR$\uparrow$ &\multicolumn{2}{c}{GAUPR$\uparrow$} &AAUPR$\uparrow$ &FPR$\downarrow$ &\multicolumn{2}{c}{GFPR$\downarrow$} &AFPR$\downarrow$ \\
        &   &   &\multicolumn{1}{c}{$\ell_2$} &\multicolumn{1}{c}{$\ell_\infty$} &\multicolumn{1}{c|}{$\ell_\infty$} &   &\multicolumn{1}{c}{$\ell_2$} &\multicolumn{1}{c}{$\ell_\infty$} &\multicolumn{1}{c|}{$\ell_\infty$} &   &\multicolumn{1}{c}{$\ell_2$} &\multicolumn{1}{c}{$\ell_\infty$} &\multicolumn{1}{c}{$\ell_\infty$} \\
        \midrule
        - \textbf{Standard} \\
        Plain$^*$   &95.01  &94.56  &48.86  &0.00 &24.52  &99.42     &60.05   &0.00    &82.30  &35.72  &100.0  &100.0     &96.72 \\
        OE$^*$      &\textbf{95.53} &\textbf{98.78}  &46.88 &0.00 &37.91  &\textbf{99.87}  &63.08 &0.00 &84.49  &\textbf{4.71}  &100.0   &100.0     &70.26 \\
        VOS$^\dag$ &94.62  &90.82 &30.13 &0.00 &20.62 &99.15 &41.62 &0.00 &81.80 &61.66 &94.10 &100.0 &100.0 \\
        LogitNorm$^\ddag$ &94.48  &96.71 &40.73 &0.00 &39.76 &99.64 &49.31 &0.00 &86.47 &13.95 &100.0 &100.0 &91.10 \\
        - \textbf{Adversarial} \\
        \gray{ACET$^*$} &\gray{91.48} &\gray{97.24} &\gray{60.21} &\gray{0.00} &\gray{93.01} &\gray{99.68} &\gray{76.22} &\gray{0.00} &\gray{99.16} &\gray{13.82} &\gray{95.65} &\gray{100.0} &\gray{32.15} \\
        \gray{ATOM$^*$} &\gray{92.33} &\gray{98.82} &\gray{97.15} &\gray{0.00} &\gray{44.65} &\gray{99.86} &\gray{95.51} &\gray{0.00} &\gray{85.74} &\gray{4.14}  &\gray{5.04}  &\gray{100.0} &\gray{62.65} \\
        \multicolumn{1}{l}{- \textbf{Guaranteed}} \\
        \gray{GOOD$^*_{80}$}   &\gray{90.13} &\gray{93.12} &\gray{36.45} &\gray{57.52} &\gray{78.11} &\gray{99.22}  &\gray{52.31} &\gray{89.54} &\gray{95.19} &\gray{30.00}  &\gray{100.0} &\gray{72.45} &\gray{47.55} \\
        ProoD$^*\Delta=3$ &95.46 &98.72 &52.36 &\textbf{59.56} &64.22 &\textbf{99.87}  &66.53 &\textbf{93.89} &94.52 &5.49  &100.0  &100.0 &86.49 \\
        DISTRO (our)     &95.47 &98.72 &\textbf{88.97} &59.53 &\textbf{83.24} &\textbf{99.87}  &\textbf{92.75}  &\textbf{93.89} &\textbf{97.32} &5.29 &\textbf{67.86} &100.0 &\textbf{34.56} \\
        \midrule
        ID: CIFAR100 &Acc. &AUC$\uparrow$ &\multicolumn{2}{c}{GAUC$\uparrow$} &AAUC$\uparrow$ &AUPR$\uparrow$ &\multicolumn{2}{c}{GAUPR$\uparrow$} &AAUPR$\uparrow$ &FPR$\downarrow$ &\multicolumn{2}{c}{GFPR$\downarrow$} &AFPR$\downarrow$ \\
        &   &   &\multicolumn{1}{c}{$\ell_2$} &\multicolumn{1}{c}{$\ell_\infty$} &\multicolumn{1}{c|}{$\ell_\infty$} &   &\multicolumn{1}{c}{$\ell_2$} &\multicolumn{1}{c}{$\ell_\infty$} &\multicolumn{1}{c|}{$\ell_\infty$} &   &\multicolumn{1}{c}{$\ell_2$} &\multicolumn{1}{c}{$\ell_\infty$} &\multicolumn{1}{c}{$\ell_\infty$} \\
        \midrule
        - \textbf{Standard} \\
        Plain$^*$   &\textbf{77.38} &81.60 &30.63 &0.00 &16.98 &97.84 &45.10 &0.00 &81.27 &82.52 &100.0 &100.0 &100.0 \\
        OE$^*$      &77.28 &90.41 &39.87 &0.00 &22.79 &98.90 &49.46 &0.00 &81.96 &47.49 &100.0 &100.0 &87.74 \\
        - \textbf{Adversarial} \\
        ACET$^*$ &74.47 &90.27 &36.36 &0.00 &27.68 &98.84 &43.50 &0.00 &82.60 &44.11 &\textbf{90.41} &100.0 &74.99 \\
        \gray{ATOM}$^*$ &\gray{71.73} &\gray{91.72} &\gray{84.38} &\gray{0.00} &\gray{31.52} &\gray{98.88} &\gray{79.95} &\gray{0.00} &\gray{83.36}      &\gray{30.81} &\gray{30.09}  &\gray{100.0} &\gray{73.69} \\
        - \textbf{Guaranteed} \\
        ProoD$^* \Delta=1$   &76.79 &\textbf{90.90} &42.83 &\textbf{37.67} &43.81 &\textbf{98.91} &50.90 &\textbf{89.66} &90.46 &42.12 &100.0 &100.0 &97.11 \\
        DISTRO (our) &76.78 &90.89 &\textbf{59.39} &37.53  &\textbf{62.77} &98.90 &\textbf{69.41} &89.63 &\textbf{93.59}  &\textbf{40.94} &100.0  &100.0 &\textbf{58.58} \\
        \bottomrule
    \end{tabular}
    \end{adjustbox}
    \footnotesize{$*$ Pre-trained models from \citet{prood}, $\dagger$ Pre-trained from \citet{vos}, $\ddag$ Pre-trained from \citet{logitnorm}.
    }
  \vspace{-1em}
\end{table*}

Here, we describe the results shown in \autoref{tab:ood_average}.
As previously, we grayed-out models with an accuracy drop greater than $3\%$ with respect to the model with highest accuracy.
The objective of this choice is to prioritize clean ID accuracy over all other metrics.
A comparison of the remaining metrics is then made on an equal basis.
Despite this, there is no direct comparison between the GAUC of $\ell_2$ and $\ell_\infty$ norms.
This is primarily due to the fact that the guaranteed upper bound of $\ell_\infty$ is computed only for OOD data, whereas $\ell_2$ is computed for both (ID \& OOD).
Additionally, we choose any radius $R>0$ for $\ell_2$, while for $\ell_\infty$, $\epsilon$ is fixed to $0.01$\footnote{This problem can be addressed by considering $R \geq \sqrt{d}\cdot \epsilon$.}.

We observe that the performances of LogitNorm and VOS on clean AUC, AUPR and FPR are suboptimal.
The reason for this is that we are evaluating MSP~\cite{msp} instead of the suggested normalization~\cite{logitnorm} and energy~\cite{vos} functions for LogitNorm and VOS, respectively.
To ensure a fair comparison we decided to standardize the output function across all models.
On \texttt{CIFAR100}, only the most effective methods of \texttt{CIFAR10} have been tested.

\textbf{Outcomes.}
In light of these considerations, we note that OE achieved the highest clean AUC, AUPR, and FPR.
In case of AAUC, ACET shows the best results for \texttt{CIFAR10}.
While ATOM achieves close to optimal performance for the guaranteed $\ell_2$-norm AUC, AUPR and FPR.
Both methods are trained adversarially on outliers, which makes them more robust on OOD data, but at the expense of a reduced clean accuracy.

\begin{wraptable}{l}{4.2cm}
    % \vspace{-1em}
    \centering
    \caption{\small Overall average between the metrics of \autoref{tab:ood_average} for \texttt{CIFAR10/100} (\texttt{C-10}, \texttt{C-100}).}
    \label{tab:ood_overall}
    % \vspace{0.3em}
    \begin{adjustbox}{width=0.25\textwidth,center}
    \begin{tabular}{lrr}
    \toprule
    Method      &\multicolumn{2}{c}{Average} \\
                &\texttt{C-10} &\texttt{C-100} \\
    \midrule
     Plain              &44.02 &34.48 \\
     OE                 &50.12 &40.42 \\
     VOS                &38.60 &- \\
     LogitNorm          &46.31 &- \\
     ACET               &59.64 &41.86 \\
     ATOM               &64.79 &54.38 \\
     GOOD$_{80}$        &64.74 &- \\
     ProoD $\Delta=3$   &64.09 &52.51 \\
     DISTRO (our)       &\textbf{77.08} &\textbf{59.95} \\
     \bottomrule
    \end{tabular}
    \end{adjustbox}
    % \vspace{-1em}
\end{wraptable}

Similarly to the ID results, DISTRO demonstrates the potential benefits of diffusion models to augment the model robustness in terms of $\ell_2$-norm guaranteed and adversarial AUCs.
Although there is a slight decrease in $\ell_\infty$-norm GAUC, GAUPR and GFPR, which could likely be suppressed by fine tuning the classifier in conjunction with the denoiser.
In \autoref{tab:ood_overall}, we average all the metrics of \autoref{tab:ood_average} for \texttt{CIFAR10} (including clean ID accuracy).
Surprisingly, ATOM shows similar results as ProoD and GOOD. 
This can be related to the high certification radius obtained for GAUC of $\ell_2$-norm. 
% We report each individual dataset's results in \autoref{app:experiments}.

% In \autoref{app:standardized},  an additional comparison on a similar network architecture for all methods is presented.

\subsubsection{Similar Model Capacity}\label{app:standardized}

Here, we outline the configurations and results of \autoref{tab:ood_average_standard}. 
Each technique is evaluated using the same architecture, acknowledging that the results from \autoref{tab:ood_average} do not depend just on the performance of the method, but also on the robustness of the model and the specific OOD dataset utilized.
Therefore we retrain all presented methods using a ResNet18~\cite{resnet} architecture for \texttt{CIFAR10} and \texttt{CIFAR100} respectively. 
For methods that require an additional OOD dataset for training, such as OE~\cite{oe}, ACET~\cite{acet}, ATOM~\cite{atom}, ProoD~\cite{prood} and DISTRO, we use the same subset of \texttt{OpenImages}~\cite{openimages} containing 50'000 images.
Furthermore, we consider an input normalization of $0.5$ across all dimensions for both mean and standard deviation.
In addition, we attempt to be as minimally intrusive as possible when it comes to the default training procedure.

For Plain, OE and LogitNorm we run the implementation\footnote{\href{https://github.com/Jingkang50/OpenOOD}{https://github.com/Jingkang50/OpenOOD}} from \citet{yang2022openood} and leave the hyperparameters unchanged.
Similarly for ACET and ATOM, we only change the model architecture and normalization and run both  implementations from ATOM\footnote{\href{https://github.com/jfc43/informative-outlier-mining}{https://github.com/jfc43/informative-outlier-mining}}.
Lastly, we train ProoD\footnote{\href{https://github.com/AlexMeinke/Provable-OOD-Detection}{https://github.com/AlexMeinke/Provable-OOD-Detection}} from \citet{prood} using their training configuration files, where the discriminator is trained for 1000 epochs and the bias shift ($\Delta$) is 3/1 for \texttt{CIFAR10/100}, respectively.

% All images are normalized with mean and standard deviation set to $0.5$ across all channels. 
% For Plain, Outlier Exposure and LogitNorm we use RandomHorizontalFlip and RandomCrop as augmentation techniques. 
% We optimize these models using Stochastic Gradient Descent for 100 epochs, with an initial learning rate of $0.1$, annealing to $10^{-6}$ over the course of training, momentum of $0.9$ and weight\_decay of $0.0005$. 
% The training batch size is $128$ for all methods.
% In general for ATOM and ACET, we try to follow the hyperparameters from the Github implementation. 
% For ACET and ATOM the training is mostly similar, but we use SGD with weight decay of $0.0001$ and a the initial learning rate of $0.1$ is multiplied by $0.1$ at epoch 50, 75 and 90. 
% The PGD attack in ACET is conducted using $\eps=2/255$ for 5 epochs using a step size of 2. 
% To select the outliers in ATOM, we use a quantile size of $1/8$.
% For ProoD, we follow their original implementation and just change add the normalization of $0.5$ for both mean and standard deviation.

\begin{table*}[htb]
\vspace{-0.5em}
    \centering
    \caption{\textbf{Robust OOD detection with ResNet18.} We consider the following metrics: clean top-1 accuracy on \texttt{CIFAR10/100} test sets, clean AUC, guaranteed (GAUC), adversarial AUC (AAUC), clean AUPR, guaranteed AUPR (GAUPR), adversarial AUPR (AAUPR), clean FPR\@95\% (FPR), guaranteed FPR\@95\% (GFPR) and adversarial FPR\@95\% (AFPR). 
    Averaging was performed on a variety of OOD datasets. We consider MSP~\cite{msp} for all methods and metrics (with temperature $T=1$). The guaranteed $\ell_2$-norm is computed for $\sigma = 0.12$ for all $R>0$, while the adversarial and guaranteed $\ell_\infty$-norm are computed for $\epsilon = 0.01$. The grayed-out models have an accuracy drop greater than $3\%$ relative to the model with the highest accuracy. \textbf{Bold} numbers are superior results.
    } 
    \label{tab:ood_average_standard}
    \begin{adjustbox}{width=\textwidth,center}
    \begin{tabular}{lr|rrrr|rrrr|rrrr}
    \toprule
        ID: CIFAR10 &Acc. &AUC$\uparrow$ &\multicolumn{2}{c}{GAUC$\uparrow$} &AAUC$\uparrow$ &AUPR$\uparrow$ &\multicolumn{2}{c}{GAUPR$\uparrow$} &AAUPR$\uparrow$ &FPR$\downarrow$ &\multicolumn{2}{c}{GFPR$\downarrow$} &AFPR$\downarrow$ \\
        &   &   &\multicolumn{1}{c}{$\ell_2$} &\multicolumn{1}{c}{$\ell_\infty$} &\multicolumn{1}{c|}{$\ell_\infty$} &   &\multicolumn{1}{c}{$\ell_2$} &\multicolumn{1}{c}{$\ell_\infty$} &\multicolumn{1}{c|}{$\ell_\infty$} &   &\multicolumn{1}{c}{$\ell_2$} &\multicolumn{1}{c}{$\ell_\infty$} &\multicolumn{1}{c}{$\ell_\infty$} \\
        \midrule
 		Plain &94.32 &92.28 &35.81 &0.00 &23.71 &99.00 &46.83 &0.00 &82.00 &40.21 &93.56 &100.0 &98.88 \\
		LogitNorm &94.71 &95.58 &34.19 &0.00 &35.00 &99.54 &49.63 &0.00 &85.14 &33.06 &95.12 &100.0 &92.20\\
		OE &92.41 &97.35 &50.56 &0.00 &37.95 &99.71 &62.25 &0.00 &85.51 &13.44 &100.0 &100.0 &74.91\\
		ACET &93.66 &\textbf{97.86} &37.45 &0.00 &65.21 &\textbf{99.75} &50.26 &0.00 &91.99 &8.94 &100.0 &100.0 &\textbf{50.29} \\
		\gray{ATOM} &\gray{91.90} &\gray{98.12} &\gray{97.98} &\gray{97.63} &\gray{62.79} &\gray{99.78} &\gray{98.16} &\gray{99.78} &\gray{91.49} &\gray{8.7} &\gray{9.42} &\gray{0.00} &\gray{51.56} \\
		ProoD &\textbf{95.20} &96.91 &44.95 &\textbf{63.44} &64.61 &99.63 &60.27 &\textbf{94.37} &94.42 &\textbf{16.03} &100.0 &\textbf{91.90} &78.22 \\
		DISTRO (our) &\textbf{95.20} &96.80 &\textbf{86.63} &59.86 &\textbf{71.70} &99.62 &\textbf{90.80} &93.78 &\textbf{95.72} &16.55 &\textbf{66.88} &99.96 &67.59\\
        \midrule
        ID: CIFAR100 &Acc. &AUC$\uparrow$ &\multicolumn{2}{c}{GAUC$\uparrow$} &AAUC$\uparrow$ &AUPR$\uparrow$ &\multicolumn{2}{c}{GAUPR$\uparrow$} &AAUPR$\uparrow$ &FPR$\downarrow$ &\multicolumn{2}{c}{GFPR$\downarrow$} &AFPR$\downarrow$ \\
        &   &   &\multicolumn{1}{c}{$\ell_2$} &\multicolumn{1}{c}{$\ell_\infty$} &\multicolumn{1}{c|}{$\ell_\infty$} &   &\multicolumn{1}{c}{$\ell_2$} &\multicolumn{1}{c}{$\ell_\infty$} &\multicolumn{1}{c|}{$\ell_\infty$} &   &\multicolumn{1}{c}{$\ell_2$} &\multicolumn{1}{c}{$\ell_\infty$} &\multicolumn{1}{c}{$\ell_\infty$} \\
        \midrule
		Plain &77.54 &84.50 &38.11 &0.00 &24.17 &98.16 &44.96 &0.00 &82.32 &67.61 &100.0 &100.0 &98.04 \\
		LogitNorm &76.25 &84.06 &40.93 &0.00 &47.64 &98.04 &46.80 &0.00 &87.25 &73.70 &100.0 &100.0 &87.98\\
		OE &75.84 &88.96 &38.90 &0.00 &17.90 &\textbf{98.72} &48.82 &0.00 &81.43 &49.61 &100.0 &100.0 &99.41\\
		\gray{ACET} &\gray{73.71} &\gray{95.65} &\gray{42.03} &\gray{0.00} &\gray{52.49} &\gray{99.44} &\gray{48.54} &\gray{0.00} &\gray{89.23} &\gray{13.96} &\gray{100.0} &\gray{100.0} &\gray{60.39} \\
		ProoD &\textbf{77.77} &\textbf{89.47} &40.72 &\textbf{37.68} &49.16 &98.66 &49.97 &\textbf{89.66} &91.08 &\textbf{40.44} &100.0 &100.0 &84.15 \\
		DISTRO (our) &77.73 &88.90 &\textbf{55.57} &29.71 &\textbf{51.89} &98.60 &\textbf{67.62} &87.44 &\textbf{91.71} &43.24 &100.0 &100.0 &\textbf{79.34} \\
        \bottomrule
    \end{tabular}
    \end{adjustbox}
  \vspace{-1em}
\end{table*}

\subsubsection*{Discussion}

It is evident that the $\ell_2$-norm GAUC (and GAUPR) diverge from zero when standard OOD detection models are considered. 
This illustrates the potential of the $\ell_2$-norm to provide certified OOD detection for any method and architecture.
Consequently, it facilitates the experimental evaluation of new robust OOD detection algorithms (both adversarial and certified).

As a side note, the one-shot denoiser appears to improve robustness certification metrics while not compromising clean metrics, such as AUC. 
In some cases, it also appears to be slightly better, even though the denoising process should produce images that are as similar as possible to those considered during training.
This is because a single shot of denoising does not compromise the OOD sample or generate an allucinated one.
Additionally, one-shot denoising introduces so little variance that in this benchmark, the results were similar across multiple runs.
\section{Conclusion}\label{sec:conclusion}

Current OOD robustness certification relies on external discriminators or loose certification mechanisms~\cite{prood}.
We propose an alternative using randomized smoothing~\cite{randomized_smoothing} for $\ell_2$-norm certificates, applicable to any classifier without specific requirements or training.
In comparison with previously proposed $\ell_\infty$-norm GAUC, standard approaches for OOD detection show non-zero results for guaranteed $\ell_2$-norm AUC and AUPR.
Unfortunately, a large number of samples derived around the input must be propagated through the network, increasing computational costs.
Additionally, we propose a method combining three techniques: diffusion denoising for noise removal, an OOD detection method, and a certified binary discriminator. 
This combination improves OOD robustness detection by around 13\%/5\% on CIFAR10/100 datasets compared to earlier approaches.

%%
%% Define the bibliography file to be used
\bibliography{references/certificates, references/plain_ood, references/robust_ood, references/miscellaneous, references/datasets}

%%
%% If your work has an appendix, this is the place to put it.
% \appendix

% \input{sections/appendix/proofs}
% \input{sections/appendix/settings}
% \input{sections/appendix/experiments}
% \input{sections/appendix/experiments_standardized}

\end{document}